\theoremstyle{thmstyleone}
\newtheorem{theorem}{Theorem}
\theoremstyle{thmstyletwo}
\theoremstyle{thmstylethree}
\newtheorem{definition}{Definition}
\def\BibTeX{{\rm B\kern-.05em{\sc i\kern-.025em b}\kern-.08em
    T\kern-.1667em\lower.7ex\hbox{E}\kern-.125emX}}
\begin{document}

\title{A Clustering Method with Graph\\Maximum Decoding Information}

\author{
\IEEEauthorblockN{Xinrun Xu\textsuperscript{1,2}$^\text{*}$, Manying Lv\textsuperscript{1,2}$^\text{*}$, Zhanbiao Lian\textsuperscript{1,2}, 
Yurong Wu\textsuperscript{1,2}, 
Jin Yan\textsuperscript{1,2}$^\text{\dag}$,
Shan Jiang\textsuperscript{3}$^\text{\dag}$, Zhiming Ding\textsuperscript{1,2}$^\text{\dag}$}
\IEEEauthorblockA{
\textsuperscript{1}{University of Chinese Academy of Sciences}, Beijing, China,\\
\textsuperscript{2}{Institute of Software, Chinese Academy of Sciences}, Beijing, China \\
\textsuperscript{3}{Advanced Institute of Big Data (Beijing), Beijing, China}
}
Email:\{xuxinrun20, lvmanying21,
yvette.yan\}@mails.ucas.ac.cn,
jiangshan@alumni.nudt.edu.cn,
zhiming@iscas.ac.cn
}
\maketitle
\renewcommand{\thefootnote}{\fnsymbol{footnote}}
\footnotetext[1]{Equal Contribution.} 
\footnotetext[2]{Corresponding Author.}
\footnotetext[3]{This work was supported by the National Key R\&D Program of China (No. 2022YFF0503900).}
\renewcommand{\thefootnote}{\arabic{footnote}}

\maketitle

\begin{abstract}
The clustering method based on graph models has garnered increased attention for its widespread applicability across various knowledge domains. Its adaptability to integrate seamlessly with other relevant applications endows the graph model-based clustering analysis with the ability to robustly extract "natural associations" or "graph structures" within datasets, facilitating the modelling of relationships between data points. Despite its efficacy, the current clustering method utilizing the graph-based model overlooks the uncertainty associated with random walk access between nodes and the embedded structural information in the data. To address this gap, we present a novel \textbf{C}lustering method for \textbf{M}aximizing \textbf{D}ecoding \textbf{I}nformation within graph-based models, named CMDI. CMDI innovatively incorporates two-dimensional structural information theory into the clustering process, consisting of two phases: graph structure extraction and graph vertex partitioning. Within CMDI, graph partitioning is reformulated as an abstract clustering problem, leveraging maximum decoding information to minimize uncertainty associated with random visits to vertices. Empirical evaluations on three real-world datasets demonstrate that CMDI outperforms classical baseline methods, exhibiting a superior decoding information ratio (DI-R). Furthermore, CMDI showcases heightened efficiency, particularly when considering prior knowledge (PK). These findings underscore the effectiveness of CMDI in enhancing decoding information quality and computational efficiency, positioning it as a valuable tool in graph-based clustering analyses.
\end{abstract}

\begin{IEEEkeywords}
Clustering, Decoding Information, Structure Extraction, Graph Partitioning
\end{IEEEkeywords}

\section{Introduction}
Clustering, a crucial unsupervised learning method, organizes $n$ data points into $k$ clusters ($k \textless n$) based on characteristics like distance or similarity. Used in various domains such as market analysis, biological classification, and geographic data analysis, it helps identify natural associations in data and serves as a pre-processing step for other data mining techniques. Our study focuses on clustering for discovering natural data associations.

Researchers have proposed several clustering models, including connectivity models \cite{DBLP:conf/kdd/MonathDGZAMMNTT21}, centroid models \cite{chen2023optimization}, density models \cite{DBLP:journals/pr/ChenZBWCD21}, subspace models \cite{DBLP:journals/jcst/WangZCQZLZZ21,ahmed2021remaxint,DBLP:journals/sac/AffeldtLN21}, and graph-based models \cite{DBLP:journals/isci/YangDLLGUH21,DBLP:journals/concurrency/MaLZLC21,jiang2022flow}. For example, hierarchical clustering \cite{DBLP:conf/kdd/MonathDGZAMMNTT21,DBLP:journals/kbs/NguyenASCA20} uses connectivity, while K-Means \cite{bezdan2021hybrid,chen2023optimization} and DBSCAN \cite{DBLP:journals/pr/ChenZBWCD21,an2023strp} are based on centroid and density models, respectively. Subspace models include bi-clustering \cite{DBLP:journals/jcst/WangZCQZLZZ21} and co-clustering \cite{ahmed2021remaxint}, while spectral clustering \cite{DBLP:conf/aaai/SunCWLF20} exemplifies the graph-based model, applied in fields from mutation testing \cite{DBLP:journals/infsof/WeiYGL21} to speech separation \cite{zhang2023single}.

Spectral clustering uses the similarity matrix's eigenvalues for dimensionality reduction and cluster formation. It identifies natural associations in data by optimizing edge weights in graph structures through graph cutting. This paper is inspired by the spectral clustering approach within the graph model.

Graph-based data clustering, especially spectral clustering, faces challenges in handling large and high-dimensional datasets due to the complexity of solving the Laplacian matrix's eigenvalues and eigenvectors. Additionally, effective graph cut or vertex partitioning is essential but difficult to achieve. Existing methods like RatioCut \cite{DBLP:conf/icassp/LiNL18} and Ncut \cite{DBLP:conf/iccv/ChewC15} focus on optimizing graph weights but often neglect the uncertainty of node connectivity and the structural information in data space \cite{DBLP:journals/tit/LiP16,DBLP:journals/tit/LiYXWHWDXZ18}.

To address these issues, we introduce CMDI, a new algorithm for graph-based clustering that leverages two-dimensional structural information theory \cite{DBLP:journals/tit/LiP16,xu2023optimal} to partition graph vertices. CMDI aims to decode the intrinsic structure of systems and maximize system decoding information. Using structural entropy theory, it divides vertices into clusters for effective decoding of dataset structures. CMDI operates under two scenarios: one where natural data associations are known, forming a graph with data as vertices and relationships as edges, and another where inherent relationships are discerned from real-world data. CMDI optimizes node partitioning based on maximizing decoding information, derived from the difference between one-dimensional and two-dimensional structural entropy \cite{DBLP:journals/tit/LiYXWHWDXZ18, DBLP:journals/tit/LiP16}. This approach is crucial for establishing robust data relationships, which will be detailed further in Section \ref{se:02}.

This paper makes several significant contributions to the field of data clustering:
\begin{itemize}
\item 
Introduction of CMDI, a novel graph-based clustering method, unique in its integration of two-dimensional structural information theory. This approach maximizes decoding information, a first in clustering methodologies.
\item Development of a strategy to maximize decoding information for CMDI, which aids in establishing relationships between data points and extracting natural associations. Additionally, the paper introduces CMDI-PKs, which incorporate various prior knowledge mechanisms to improve CMDI's performance.

\item Systematic exploration and evaluation of different techniques for extracting structural information from data points. This includes the study of diverse proximity metrics to understand their effectiveness comprehensively.

\item Extensive experimental testing on three large-scale real-world datasets, demonstrating CMDI and CMDI-PK's superior performance over popular baselines. Notably, CMDI, particularly with the inclusion of prior knowledge, shows enhanced efficiency and better decoding information ratios.
\end{itemize}

\section{Structural Information And\\Decoding Information}\label{se:02}

\subsection{Essential Definitions}
Here we first give some basic definitions as follows:

\begin{definition}[One-dimensional structural information (ODSI).]
    The one-dimensional structural information of an undirected graph $G(V,E,W)$ is defined as 
    $$
    H_{1}(G)=\sum^{n=\mid V\mid}_{i=1} f \left( \operatorname{deg} \left( v_{i} \right) / \operatorname{vol}(G) \right)
    $$ 
    where $v_i\in V$ is the vertex of $G$, $de\textsl g(v_i)$ is the degree of a vertex $v_i$, and $\operatorname{vol}(G)=\sum_{i=1}^{n=\mid V \mid} \operatorname{de\textsl g}\left(v_{i}\right)$ is the volume of $G$. Usually, the mathematical expression of $f(^*)$ is defined as $f(x)=-x \log _{2} x$ in literature, where $x\in [0,1]$ and $f(0)=f(1)=0$. For the graph with empty edge sets, $vol(G)=0$.
    \label{de:01}
\end{definition}

\begin{definition}[Two-dimensional structural information (TDSI).]
    Given set $P=\left\{V_{1}, V_{2}, \cdots, V_{L}\right\}$, where $\vert P\vert=\vert V\vert$, $V_i\subset V$, and $\cap V_i=\varnothing$, $\cup V_i=V$. Then the two-dimensional structural information \cite{DBLP:journals/tit/LiP16} of an undirected graph $G(V,E,W)$ is defined as $
    H_{2}^{p}(G)=\sum_{j=1}^{L}[\frac{\operatorname{vol}\left(V_{j}\right)}{\operatorname{vol}(G)}\cdot\sum_{i=1}^{N_{i}}f\left(\operatorname{de\textsl g}\left(v_{i}^{j}\right) /\operatorname{vol}\left(V_{j}\right)\right)+
    \frac{\textsl g_{j}}{\operatorname{vol}\left(V_{j}\right)} \cdot f\left(\operatorname{vol}\left(V_{j}\right) /\operatorname{vol}(G)\right)]
    $
    
    where $\textsl g_i$ is the number of edges from $V_i$ to nodes outside of $V_i$.
    \label{de:02}
\end{definition}

\begin{definition}[Optimal two-dimensional structural information (OTDSI).]
    The optimal two-dimensional structural information of an undirected graph $G(V,E,W)$ is defined as $H_{2}(G)=\min _{P}\left\{H_{2}^{P}(G)\right\}$ where $P$ runs over all the partitions of the $G$. The optimal coding tree is a priority tree and is a hierarchical cluster or a partition sets that minimize the uncertainty of the graph.
    \label{de:03}
\end{definition}

\begin{definition}[Decoding information (DI).]
    The decoding information of an undirected graph
$G(V,E,W)$ is defined as $$D(G)=H_{1}(G)-H_{2}(G)$$ where $H_1(G)$ is the ODSI of the $G$, and $H_2(G)$ is the OTDSI of the $G$.
    \label{de:04}
\end{definition}

\subsection{Statements on Decoding Information}
This paper focuses on the concept of two-dimensional structural information entropy, which measures the uncertainty in a graph under an optimal coding tree, representing the graph's inherent uncertainty at a more complex level than one-dimensional structural entropy. The optimal coding tree is a hierarchical model and data structure abstracting the graph, with the 2D structural information entropy dependent on the uncertainty within this tree. Decoded information, a key concept in this study, is calculated by subtracting the one-dimensional structural entropy from the two-dimensional structural information entropy.

Decoding information quantifies the uncertainty reduction achievable through hierarchical abstraction in a graph. Maximizing this decoding information is crucial for uncovering the intrinsic structure of system components or graphs. The process of maximizing decoding information is broken down into two sub-problems: extracting structure from the data space and identifying the partition set that minimizes the two-dimensional structural information entropy. This approach presents a new principle for exploring the substantial structure hidden within the data space.

\section{Extracting Structure}\label{se:03}

This paper addresses challenges in constructing structures from data points, focusing on two primary difficulties. First, it involves extracting intrinsic relationships among system components when only the number of components is known. Second, in many real-world systems, the exact number of components is often unknown, resembling "black-box" systems, where only sample data points are available without additional context. This situation is common in datasets provided for analysis and mining, making the extraction of substantial structure from these datasets crucial for insightful data analysis.

\subsection{Extracting Structure Based on Mutual Information}

A key method for structure extraction discussed is Mutual Information (MI) \cite{DBLP:journals/tit/LiA21b}, based on information theory. MI is used for quantifying the information that one random variable contains about another, effectively calculating statistical similarities between node pairs. This method is capable of detecting both linear and nonlinear relationships between two random variables, $\Gamma_{u}$ and $\Gamma_{v}$. The estimation of MI forms a part of the process for understanding and analyzing the underlying relationships in data.

\begin{equation}
  \begin{aligned} 
    M I_{u v} &=H\left(\Gamma_{u}\right)-H\left(\Gamma_{u} \mid \Gamma_{v}\right)=\sum_{u, v} p(u, v) \log \frac{p(u, v)}{p(u) p(v)} \\
     &=\sum_{u, v} p_{u v} \log \frac{p_{u v}}{p_{u} p_{v}} 
  \end{aligned}
  \label{eq:01}
\end{equation}
where both $p_u$ and $p_v$ are the probability density function (PDF) of the vertex $u$,
$p_{uv}$ is the joint PDF of a pair $(u,v)$.
$H\left(\Gamma_{u}\right)$ is the entropy of $\Gamma_{u}$, and
$H\left(\Gamma_{u} \mid \Gamma_{v}\right)$ is the conditional entropy of
$\Gamma_{u}$ given $\Gamma_{v}$. Notably, the matrix $MI$ is symmetric.

\subsection{Extracting Structure via Maximum Likelihood Estimation}

\begin{algorithm}[htbp]
  \caption{\textbf{Extracting Structure Method Based on MLE, ESMBMLE}}
  \label{algo:02}
  \KwIn{$\sigma(t)\leftarrow\textbf{X}\in R^{n\times T}$ - $n-$ dimensional time-series data $\sigma(t)$ with time length $T$.\\ 
  \textbf{Initialize:}  $\alpha$ - The learning rate parameter.}
  \KwOut{ $G$ - Graph.}
  $W\leftarrow InitRandom\{(n, n)\}$.  // $W \in R^{n\times n}$,  initialize  $W_{ij}$ at random.\\
  \For{$i$ in $Ran\textsl ge(n)$}{
    $H_i(\sigma(t)) \leftarrow \sum_jW_{ij}\sigma_j(t)$.\\\label{algo02ln03}
    $P[\sigma_i(t+1)|\sigma(t)] \leftarrow \frac{exp[\sigma_i(t+1)\cdot H_i(\sigma(t))]}{exp[H_i(\sigma(t))]+exp[-H_i(\sigma(t))]}$.\\\label{algo02ln04}
    $P_i \leftarrow \prod_{t=1}^{T-1}P[\sigma_i(t+1)|\sigma(t)]$.\\\label{algo02ln05}
    $W_{ij} \leftarrow W_{ij}+\frac{\alpha}{T-1}\frac{\partial ln P_i}{\partial W_{ij}}$ with Equation \eqref{eq:05}. 
  }
  $A \leftarrow Threshold(W)$.  // $A \in R^{n\times n}$ is the adjacency matrix.\\
  $G \leftarrow CreateGraph(A)$.  // Create $G$ from $A$.\\
  \Return{G}.\label{algo02ln10}
\end{algorithm}

Maximum Likelihood Estimation (MLE) \cite{DBLP:journals/csda/YuTR20} is a widely adopted statistical method effective for parameter inference in stochastic models. MLE is particularly advantageous in large datasets, converging to the true parameters of the system model. To unveil the intrinsic structure of a system, it becomes imperative to observe the state of each component within this "black box" system, allowing for the comprehensive capture of the dynamical process and, subsequently, the extraction of essential structure. 
In this study, we employ the Kinetic Ising model (KIM) \cite{DBLP:journals/amc/FonsecaKMM15} to simulate the dynamic process of the graph. In KIM, the $n$-spin state $\sigma=\{\sigma_{1}, \sigma_{2}, \cdots, \sigma_{n}\}$ at time $t+1$ is stochastically determined from the current state $\sigma(t)$ at time $t$ with the following probability:

\begin{equation}
  P\left[\sigma_{i}(t+1) \mid \sigma(t)\right]\!=\!\frac{\exp \left[\sigma_{i}(t+1) \cdot H_{i}(\sigma(t))\right]}{\exp \left[H_{i}(\sigma(t))\right]+\exp \left[-H_{i}(\sigma(t))\right]}
  \label{eq:02}
\end{equation} 
where $H_{i}(\sigma(t))=\sum_{j} W_{i j} \sigma_{j}(t)$ represents the influence of the present state
$\sigma(t)$ on the future state
$\sigma_i(t+1)$ for $i=1,2,\cdots,n$. $W_{ij}$ is the weight matrix.

Given $n-$dimensional time-series data $\sigma(t)$ with length $T$, the data likelihood is defined as follows:
\begin{equation}
  \setlength{\abovedisplayskip}{1pt}
  \setlength{\belowdisplayskip}{1pt}
  \mathrm{P}=\prod_{t=1}^{T-1} \prod_{i=1}^{n} P\left[\sigma_{i}(t+1) \mid \sigma(t)\right]
  \label{eq:03}
\end{equation}

Using MLE, one can optimize $W_{ij}$ to increase $\ln\mathrm{P}$ as follows:
\begin{equation}
  \setlength{\abovedisplayskip}{1pt}
  \setlength{\belowdisplayskip}{1pt}
  {W}_{i j}^{\prime}=W_{i j}+\frac{\alpha}{T-1} \frac{\partial \ln \mathrm{P}}{\partial W_{i j}}
  \label{eq:04}
\end{equation}
where $\alpha$ represents the learning rate with the gradient ascent method. Based on Equation 
\eqref{eq:02}, the solution of the gradient $\frac{\partial \ln \mathrm{P}}{\partial W_{i j}}$ is defined as follows:
\begin{equation}
  \setlength{\abovedisplayskip}{1pt}
  \setlength{\belowdisplayskip}{1pt}
  \frac{\partial \ln \mathrm{P}}{\partial W_{i j}}\!=\!\sum_{t=1}^{T-1}\left\{\sigma_{i}(t\!+\!1)\! \cdot\! \sigma(t)\!-\!\tanh \left[H_{i}(\sigma(t))\right]\! \cdot\! \sigma_{j}(t)\right\}
  \label{eq:05}
\end{equation}
where $\tanh (x)=\frac{\sinh x}{\cosh x}$.

\subsection{Extracting Structure Based on Proximity Metrics}
ESMBMI and ESMBMLE Algorithms are effective for reconstructing systems with known components but unclear relationships. However, they falter when component information is also unclear, as in "black box" systems. In such cases, using proximity metrics to create a graph from sensor-sampled data, characterized by proximities $w_{i,j}$, is crucial for revealing local neighbourhood relationships and reconstructing the system's structure.

\textbf{Proximity Metrics.}
Proximity metrics for numeric variables primarily encompass Euclidean (EUC), Manhattan (MANH), Minkowski (MINK), Chebyshev (CHEB), Canberra \cite{DBLP:journals/ijisp/MahantaK18} (CANB), Mahalanobis (MAHA), Angular (ANGU), Pearson Correlation \cite{DBLP:journals/ijcse/ShahZ21} (P-COR), Gaussian Similarity (GAUS), $\varepsilon$-Neighborhood ($\varepsilon$-NE), and $k$-Nearest Neighbors ($k$-NN). The CANB metric computes absolute distance, yet it is not scale-dependent. Conversely, the MAHA, ANGU, and P-COR measures predominantly assess the correlation between variables and exhibit scale invariance.

\textbf{Extracting Structure.}
Initiating with the Gaussian kernel function \cite{DBLP:journals/cin/ZhangLW20} (also known as the radial basis function, RBF), the proximity metrics $d_{x,y}$ contribute to the construction of an affinity matrix. Subsequently, the adjacency matrix of the neighbourhood graph is naturally derived by applying a suitable threshold to the constructed affinity matrix.

\begin{equation}
  \setlength{\abovedisplayskip}{1pt}
  \setlength{\belowdisplayskip}{1pt}
  A_{x, y}=\exp \left(-d_{x, y}^{2} /\left(2 \sigma^{2}\right)\right)
  \label{eq:06}
\end{equation}

$\sigma$ in the Gaussian kernel defines its width, and $d_{x,y}$ measures similarity between data points $x$ and $y$. In this model, local neighbours are linked with high weights ($A_{x, y}$), while distant connections have low, positive weights, leading to a non-sparse adjacency matrix.

\textbf{Structure Measures:}
Assessing similarity metrics for effective neighbourhood graph construction is challenging, but evaluating the graphs helps measure their quality. Decoding information (DI) is used to assess these graphs; higher DI indicates more information in the graph, showing the effectiveness of the similarity measurement method. The choice of similarity/proximity function depends on the DI in the structure, guiding data point set structure extraction.

\section{CMDI: Clustering Algorithm for\\Maximum Decoding Information}\label{se:04}
A graph representing a system's sampled data points is derived using a specific extraction method. The graph's decoding information (DI) is key to analyzing it within the dataset. DI aims to find the optimal node partition, minimizing two-dimensional structural information entropy. Currently, there's no direct method for obtaining the optimal two-dimensional structure entropy. However, the study proposes a greedy algorithm to approximate this optimal partition, effectively splitting nodes into distinct clusters and maximizing DI.

\textbf{Decoding Information (DI).}
Suppose that
$P = \{V_1,V_2,\cdots,V_L\}$ is the optimal partition of $V$, and the partition-based expression of the ODSI is as follows:
\begin{equation}
    \begin{aligned}
        H_{2}^{p}(G)=& \sum_{j=1}^{L}
        \Big[
            \operatorname{vol}\left(V_{j}\right) /  \operatorname{vol}(G) \cdot \sum_{i=1}^{\lvert v_{j}\rvert} f\left(\operatorname{deg}\left(v_{i}^{j}\right) / \operatorname{vol}\left(V_{j}\right)\right)\\ &+f\left(\operatorname{vol}\left(V_{j}\right) / \operatorname{vol}(G)\right)
        \Big]
    \end{aligned}
  \label{eq:07}
\end{equation}
where $f(^*)$ is $f(x)=-x \log _{2} x$ and $f(0)=f(1)=0$. Because the partition $P$ is optimal, we have $H_2(G) = H_2^P(G)$. With the partition-based ODSI, the expression of DI can be rewritten as follows:
\begin{equation}
  \setlength{\abovedisplayskip}{1pt}
  \setlength{\belowdisplayskip}{1pt}
  \begin{aligned}
    D^{P}(G) &=H_{1}(G)-H_{2}(G) \\
    &=-\sum_{j=1}^{L} \frac{\operatorname{vol}\left(V_{j}\right)-\textsl g_{j}}{\operatorname{vol}(G)} \log _{2} \frac{\operatorname{vol}\left(V_{j}\right)}{\operatorname{vol}(G)}\\
    &=\sum_{j=1}^{L} D_{(j)}^{p}
    \end{aligned}
  \label{eq:08}
\end{equation}
where the partition $P$ that minimizes the TDSI is optimal.
$D^P(G)$ can be interpreted as the information saved by $P$ in graph $G$.
And DI builds a bridge between the ODSI and OTDSI of a graph through the partition $P$ which minimizes the TDSI.

Suppose that $P=\{V_1,V_2,\cdots,V_l\}$ is the partition of $V$, for nodes set
$V_i\in V$ and $V_j\in V$, we can define $\Delta_{(i, j)}^{p}(G)$ as follows:
\begin{equation}
  \setlength{\abovedisplayskip}{1pt}
  \setlength{\belowdisplayskip}{1pt}
  \begin{aligned}
    \Delta_{(i, j)}^{p}(G)= &D_{(i)}^{p}+D_{(j)}^{p}-D_{(i, j)}^{p}\\
    =&-\frac{\operatorname{vol}\left(V_{i}\right)-\textsl g_{i}}{\operatorname{vol}(G)} \log _{2} \frac{\operatorname{vol}\left(V_{i}\right)}{\operatorname{vol}(G)} \\
    &-\frac{\operatorname{vol}\left(V_{j}\right)-\textsl g_{j}}{\operatorname{vol}(G)} \log _{2} \frac{\operatorname{vol}\left(V_{j}\right)}{\operatorname{vol}(G)} \\
    & +\frac{\operatorname{vol}\left(V_{i j}\right)-\textsl g_{i j}}{\operatorname{vol}(G)} \log _{2} \frac{\operatorname{vol}\left(V_{i j}\right)}{\operatorname{vol}(G)}
    \end{aligned}
  \label{eq:09}
\end{equation}
where $V_i\cup V_j=V_{ij}$, $\textsl g_{ij}$ is the number of edges from $V_{ij}$ to nodes outside of $V_{ij}$.
\begin{theorem}\label{th:01}
    For node sets $V_i$ and $V_j$ ( $V_i\subset V$, $V_j\subset V$, $V_i\cap V_j=\varnothing$ ), if there is no edge between $V_i$ and $V_j$, $\Delta_{(i, j)}^{p}(G) \geq 0$ is satisfied.
\end{theorem}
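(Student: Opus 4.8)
The plan is to reduce the claim to two combinatorial consequences of the no-edge hypothesis, then show that $\operatorname{vol}(G)\cdot\Delta_{(i,j)}^{p}(G)$ collapses to a manifestly nonnegative expression. First I would record the two additivity identities forced by the assumption that no edge joins $V_i$ and $V_j$. Since the two sets are disjoint and volume is the sum of vertex degrees, $\operatorname{vol}(V_{ij})=\operatorname{vol}(V_i)+\operatorname{vol}(V_j)$. For the boundary counts, every edge leaving $V_{ij}=V_i\cup V_j$ must leave exactly one of $V_i$ or $V_j$ toward a vertex outside $V_{ij}$; because there are no $V_i$--$V_j$ edges, the edges counted by $\textsl g_i$ are precisely those from $V_i$ to the outside of $V_{ij}$, and likewise for $\textsl g_j$, so $\textsl g_{ij}=\textsl g_i+\textsl g_j$. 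I would also note that $\operatorname{vol}(V_i)-\textsl g_i$ equals twice the number of internal edges of $V_i$, whence $\operatorname{vol}(V_i)-\textsl g_i\ge 0$ and similarly $\operatorname{vol}(V_j)-\textsl g_j\ge 0$.

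Second, I would substitute these identities into the definition of $\Delta_{(i,j)}^{p}(G)$ in Equation \eqref{eq:09}. Writing $\alpha=\operatorname{vol}(V_i)-\textsl g_i$ and $\beta=\operatorname{vol}(V_j)-\textsl g_j$, the two additivity relations give $\operatorname{vol}(V_{ij})-\textsl g_{ij}=\alpha+\beta$, so the coefficient multiplying the third logarithm splits exactly into $\alpha$ and $\beta$. Regrouping the $\alpha$ terms and the $\beta$ terms separately, the $\log_2\operatorname{vol}(G)$ contributions cancel automatically and one obtains
\[
\operatorname{vol}(G)\,\Delta_{(i,j)}^{p}(G)=\bigl(\operatorname{vol}(V_i)-\textsl g_i\bigr)\log_2\frac{\operatorname{vol}(V_{ij})}{\operatorname{vol}(V_i)}+\bigl(\operatorname{vol}(V_j)-\textsl g_j\bigr)\log_2\frac{\operatorname{vol}(V_{ij})}{\operatorname{vol}(V_j)}.
\]

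Third, I would finish with a sign check. Each coefficient is nonnegative by the internal-edge observation, and since $\operatorname{vol}(V_{ij})=\operatorname{vol}(V_i)+\operatorname{vol}(V_j)$ dominates both $\operatorname{vol}(V_i)$ and $\operatorname{vol}(V_j)$, both logarithms are nonnegative; hence the right-hand side is a sum of nonnegative products, and dividing by $\operatorname{vol}(G)>0$ yields $\Delta_{(i,j)}^{p}(G)\ge 0$.

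The algebra is routine; the only real care is needed in the first step, in justifying the boundary-edge identity $\textsl g_{ij}=\textsl g_i+\textsl g_j$, as this is exactly where the no-edge hypothesis enters and is what makes the $\log_2\operatorname{vol}(G)$ terms cancel. As a sanity check on the statement, equality $\Delta_{(i,j)}^{p}(G)=0$ occurs precisely when both parts have no internal edges (so that $\alpha=\beta=0$), matching the intuition that merging two internally structureless groups saves no decoding information.
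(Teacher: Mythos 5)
Your proof is correct and follows essentially the same route as the paper's: invoke the additivity identities $\operatorname{vol}(V_{ij})=\operatorname{vol}(V_i)+\operatorname{vol}(V_j)$ and $\textsl g_{ij}=\textsl g_i+\textsl g_j$ forced by the no-edge hypothesis, collapse $\operatorname{vol}(G)\cdot\Delta_{(i,j)}^{p}(G)$ so the $\log_2\operatorname{vol}(G)$ terms cancel, and conclude by a sign check on the resulting two-term expression. In fact you are slightly more complete than the paper, which never explicitly justifies $\operatorname{vol}(V_i)-\textsl g_i\ge 0$ (your twice-the-internal-edges observation), though your parenthetical equality characterization overlooks the degenerate case where one part consists of isolated vertices ($\operatorname{vol}(V_j)=0$), in which equality holds even if $V_i$ has internal edges.
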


\begin{proof}[Proof of Theorem~{\upshape\ref{th:01}}]

By definition of partition-based DI, for the partition $P$,
$D^{P}(G)=\sum_{j=1}^{L} D_{(j)}^{P}=-\sum_{j=1}^{L} \frac{\operatorname{vol}\left(V_{j}\right)-\textsl g_{j}}{\operatorname{vol}(G)} \log _{2} \frac{\operatorname{vol}\left(V_{j}\right)}{\operatorname{vol}(G)}$, thus  and
$D_{(j)}^{p}=-\frac{\operatorname{vol}\left(V_{j}\right)-\textsl g_{j}}{\operatorname{vol}(G)}$ $\log _{2} \frac{\operatorname{vol}\left(V_{j}\right)}{\operatorname{vol}(G)}$. If there is no edge between $V_i$ and $V_j$, then
$\operatorname{vol}\left(V_{i}\right)+\operatorname{vol}\left(V_{j}\right)=\operatorname{vol}\left(V_{i j}\right)$,
$\textsl g_i+\textsl g_j=\textsl g_{ij}$. To simplify the notations, we relabel
$\operatorname{vol}(V_i)\Rightarrow V_i$, and 
$\operatorname{vol}(G) \cdot \Delta_{(i, j)}^{P}(G)$ is indicated as
$\operatorname{vol}(G) \cdot \Delta_{(i, j)}^{p}(G)=\operatorname{vol}(G) \cdot\left(D_{(i)}^{p}+D_{(j)}^{p}-D_{(i, j)}^{p}\right)$, thus
$$
\begin{aligned}
  \operatorname{vol}(G) \cdot \Delta_{(i, j)}^{p} (G)=&\operatorname{vol}(G) \cdot\left(D_{(i)}^{p}+D_{(j)}^{p}-D_{(i, j)}^{p}\right) \\
  =&-\left(V_{i}-\textsl g_{i}\right) \log _{2} \frac{V_{i}}{\operatorname{vol}(G)} \\
  &-\left(V_{j}-\textsl g_{j}\right) \log _{2} \frac{V_{j}}{\operatorname{vol}(G)} \\
  &+\left(V_{i j}-\textsl g_{i j}\right) \log _{2} \frac{V_{i j}}{\operatorname{vol}(G)} \\
  =&-\left(V_{i}-\textsl g_{i}\right) \log _{2}\left(\frac{V_{i}}{V_{i}+V_{j}}\right)\\
  &-\left(V_{j}-\textsl g_{j}\right) \log _{2}\left(\frac{V_{j}}{V_{i}+V_{j}}\right) \\
  \geq & 0
  \end{aligned}
$$
Finally, $\Delta_{(i, j)}^{p}(G) \geq 0$ holds in the Theorem~{\upshape\ref{th:01}}.
\end{proof}

\begin{theorem}\label{th:02}
    Maximizing the decoding information is equal to minimizing the two-dimensional structural information entropy in the data space.
\end{theorem}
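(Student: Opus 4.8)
The plan is to exploit the fact that the one-dimensional structural information $H_1(G)$ from Definition~\ref{de:01} depends only on the degree sequence and the volume of $G$, and is therefore a quantity that does not vary with the choice of partition $P$. Writing the partition-based decoding information as $D^{P}(G) = H_1(G) - H_2^{P}(G)$, I would treat $H_1(G)$ as a fixed additive offset so that $H_2^{P}(G)$ becomes the only partition-dependent term in the objective.

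First I would fix an arbitrary graph $G(V,E,W)$ and recall from Definition~\ref{de:01} that $H_1(G) = \sum_{i=1}^{n} f\!\left(\deg(v_i)/\operatorname{vol}(G)\right)$ is determined by $G$ alone, independently of any clustering. Next I would let the partition $P$ range over the (finite) collection of all partitions of $V$ and observe that, since $H_1(G)$ is constant over this collection, the map $P \mapsto D^{P}(G) = H_1(G) - H_2^{P}(G)$ is an order-reversing affine transformation of $P \mapsto H_2^{P}(G)$. Consequently a partition maximizes $D^{P}(G)$ if and only if it minimizes $H_2^{P}(G)$, and the two optimization problems share the same optimal partition.

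Formally I would then conclude $\max_{P} D^{P}(G) = H_1(G) - \min_{P} H_2^{P}(G) = H_1(G) - H_2(G) = D(G)$, where the middle equality invokes Definition~\ref{de:03} (OTDSI as the partition-minimizer of TDSI) and the last invokes Definition~\ref{de:04}. This exhibits the decoding information $D(G)$ as the maximum of $D^{P}(G)$ over all partitions and identifies the maximizing partition with the optimal coding tree, which is precisely the claimed equivalence.

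Because the argument is essentially a restatement of the definitions once $H_1(G)$ is recognized as partition-independent, there is no deep technical obstacle here. The only point deserving care is justifying that both the minimum in Definition~\ref{de:03} and the corresponding maximum are actually attained; this follows immediately from the finiteness of the set of partitions of a finite vertex set $V$, so no compactness or continuity argument is required.
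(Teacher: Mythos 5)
Your proposal is correct: since $H_{1}(G)$ depends only on the degree sequence and volume of $G$ and not on the partition, maximizing $D^{P}(G)=H_{1}(G)-H_{2}^{P}(G)$ over the finite set of partitions is order-reversing equivalent to minimizing $H_{2}^{P}(G)$, which together with Definitions~\ref{de:03} and~\ref{de:04} gives $\max_{P}D^{P}(G)=H_{1}(G)-H_{2}(G)=D(G)$. The paper itself states Theorem~\ref{th:02} without an explicit proof, treating it as immediate from the definitions and the remark following Equation~\eqref{eq:08}, and your argument (including the attainment-by-finiteness point) is exactly the reasoning the paper leaves implicit.
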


\begin{figure}[htbp]
  \centering
  \includegraphics[width=0.7\linewidth]{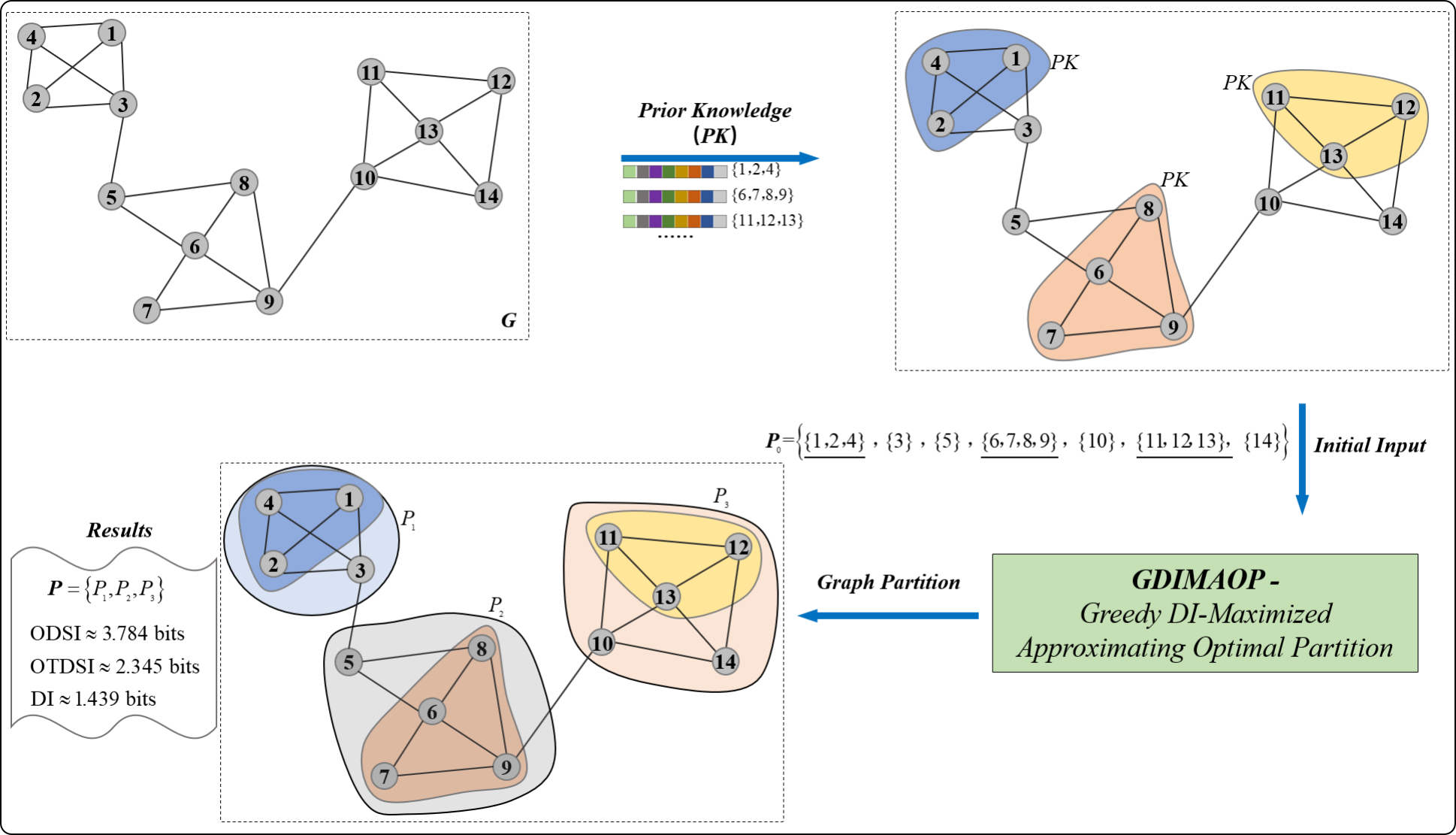}
  \caption{Workflow for partitioning graph based on the greedy DI-Maximized approximating optimal partition (GDIMAOP) with prior knowledge (PK).}  
  \label{fig:02}
\end{figure}

The DI-Maximized Partition Algorithm, detailed in this work, introduces the Greedy DI-Maximized Approximating Optimal Partition (GDIMAOP) method. GDIMAOP, a greedy algorithm, aims to approximate the optimal partition of a graph into distinct clusters. It involves three main steps: 1) Initial Partitioning: Singleton $P_i={v_i}$ is separated from $V$. 2) Minimizing $\Delta_{(i, j)}^{p}(G)$: Based on Theorem 1, this step finds the minimized value for all pairs $P_i$, $P_j$ until no pair satisfies $\Delta_{(i, j)}^{p}(P_i,P_j)\textgreater 0$. 3) Sorting Partition: The final step sorts the partition according to the node order in $V$.
GDIMAOP can incorporate partial hierarchical knowledge of nodes to improve its efficiency. An example is given where vertices in a graph $G$ are grouped into three clusters based on prior knowledge. This adaptation of GDIMAOP, which includes partial vertex knowledge, is referred to as PK-GDIMAOP. This method is particularly effective in enhancing the control and accuracy of the algorithm when such prior information is available.

\begin{algorithm}[htbp]
  \setlength{\abovedisplayskip}{-3pt}
  \setlength{\belowdisplayskip}{-3pt}
  \caption{\textbf{Greedy DI-Maximized Approximating Optimal Partition, GDIMAOP}}
  \label{algo:03}
  \KwIn{$G(V,E,W)$ - The extracted structure from data points where $|V|=n$, $|E|=m$, and $W\in R^{n\times n}$ is adjacency weights matrix.}
  \KwOut{$P=\{P_1, P_2,\cdots,P_L\}$ - Graph partition where $P_i\cap P_j = \varnothing$ and $\cup P_i = V$.}
  $\{P_1,P_2,\cdots,P_n\}\leftarrow CutVertex(V)$ where $P_i\cap P_j=\varnothing$, $\cup P_i=V$, and $|P_i|=1$.\\\label{algo03ln01}
  $\textbf{P}_0\leftarrow\{P_1,P_2,\cdots,P_n\}$.\\\label{algo03ln02}
  \Repeat{\label{algo03ln03}
    there is no $(P_i,P_j)$ such that $\Delta_{(i,j)}^{P}(P_i,P_j)\textgreater 0$
  }{
    $\delta_{min},P_{min}\leftarrow 0.0, \varnothing$.\\
    \For{$P_i$ in $P_0$}{
      ${P'}_0 \leftarrow P_0-P_i$.\\
      \For{$P_j$ in ${P'}_0$}{
        $\Delta_{(i,j)}^{P}\leftarrow\Delta_{(i,j)}^{P}(P_i,/ P_j)$.//Equation\eqref{eq:09}.\\
        \If{$\Delta_{(i,j)}^{P}\textless 0$ And $\Delta_{(i,j)}^{P}\textless \delta_{min}$}{
          $\delta_{min}\leftarrow\Delta_{(i,j)}^{P}$.\\
          $P_{min}\leftarrow P_j$.\\
        }
      }
      \uIf {$P_{min}\neq\varnothing$ And $P_{min}\in{P'}_0$} {
        ${P'}_0\leftarrow{P'}_0-P_{min}$.\\
        ${P'}_0\cdot Insert(\{P_i+P_j\})$.\\
        } \Else {
          ${P'}_0\cdot Insert(\{P_i\})$.\\
        } 
      $P_0\leftarrow{P'}_0$.\\
    }
  }\label{algo03ln22}
  $P\leftarrow SortByOrder(P_0)$.\\\label{algo03ln23}
  \Return{P}.
\end{algorithm}

\begin{algorithm}[htbp]
  \caption{\textbf{Clustering Algorithm for Maximum Decoding Information, CMDI}}
  \label{algo:04}
  \KwIn{$\textbf{X}\in R^{n\times m}$ - Data matrix with $n$ samples and $m$ features.\\
  \ \ \ \ \ \ \ \ \ \ \ \ $PK\in Z^n$- Prior knowledge list for $n$ data points.}
  \KwOut{$Y\in Z^n$ - Clustering results with $k$ labels.}
  $G\leftarrow ExtractingStructure(\textbf{X})$.//Extracting structure.\\\label{algo04ln01}
  $P\leftarrow GDIMAOP(G)$.// DI-Maximized partition process.\\\label{algo04ln02}
  or\\
  $P\leftarrow PK-GDIMAOP(G,PK)$.//Partition process with PK.\\\label{algo04ln04}
  $Y\leftarrow Mapping(P)$.//Mapping the partitions into $k$ clusters.\\\label{algo04ln05}
  \Return{Y}.
\end{algorithm}

\textbf{CMDI: Clustering Algorithm for Maximum Decoding Information}
CMDI, or Decoding Information Maximized Clustering Algorithm, expands on the structure extraction and DI-Maximized partition algorithms. It includes three main steps:
1) Graph Reconstruction: Using the structure extraction process to reconstruct the graph structure.
2) Partitioning: Depending on the availability of prior knowledge (PK), CMDI uses either the original Greedy DI-Maximized Approximating Optimal Partition (GDIMAOP) method for partitioning without PK, or the PK-GDIMAOP method when PK is available. The latter enhances clustering efficiency.
3) Mapping to Clusters: The partitions are then mapped into $k$ clusters, with $k$ representing the partition size.
CMDI operates in two modes: CMDI without PK, and CMDI-PK, which incorporates prior knowledge.

\textbf{Decoding Information Ratio (DI-R)}
DI-R is used to uniformly evaluate graph partitioning strategies across different graphs derived using various proximity metrics. It addresses the variations in the extracted graph structure resulting from different natural associations in the dataset. This measure allows for a standardized assessment of the efficiency and effectiveness of the partitioning strategy employed in the graph analysis.

\begin{equation}
  \setlength{\abovedisplayskip}{1pt}
  \setlength{\belowdisplayskip}{1pt}
  DI-R=\frac{D(G)}{H_{1}(G)}
  \label{eq:10}
\end{equation}
Based on \textit{definition 2.4}, the decoding information can be written as
$D(G)=H_{1}(G)-H_{2}(G)$. Furtherly, we could find the truth that 0.0 \textless DI-R \textless 1.0.

\begin{table*}[!htbp]
  \centering
  \caption{Experimental Datasets and Comments.}
  \label{tab:01}
  \begin{tabularx}{\linewidth}{@{}lX@{}}
    \toprule
  Name (Tag)            & Comments                                                                                                                               \\ 
  \midrule
  BJ\_6$^{th}$R\_Geo         & (Real dataset) Simplified graph topology with 115,101 nodes and 168,041 edges                                                        \\
  BJ\_Busstop           & (Real   dataset) 42,161 bus stops (1543 bus routes) of Beijing in 2013, the   coordinate of latitude and longitude with “GCS WGS 1984” \\
  BJ\_Parking           & (Real dataset) 5,881 parking places of Beijing in 2014, the coordinate of latitude and longitude with   “GCS WGS 1984”                 \\
  Ring of Cliques graph & (Synthetic dataset) This is   dataset contains 6 cliques that each clique includes 5 nodes                                             \\
  Gid graph             & (Synthetic dataset) With 5x6 grid-type nodes with   proper links                                                                       \\
  Scale-free graph      & (Synthetic dataset) This   graph is built based on Barabasi-Albert model with 30 nodes                                                \\ 
  \bottomrule
  \end{tabularx}
\end{table*}

{\footnotesize
\begin{table*}[p]
  \centering
  \caption{Graph HIM-Distances of the Extracted Graph Based on Proximity Metrics.}
  \label{tab:02}
  \begin{threeparttable}
  \begin{tabular}{cccccccccc}
    \toprule
  \multirow{2}{*}{Proximity Metrics} & \multicolumn{3}{c}{Ring of Cliques Graph}        & \multicolumn{3}{c}{Gid Graph}                    & \multicolumn{3}{c}{Scale-free Graph}             \\ \cline{2-10} 
                                     & KIM\tnote{$^a$}          & BM$^b$           & IGM$^c$          & KIM            & BM             & IGM            & KIM            & BM             & IGM            \\ 
                                     \midrule
  EUC                                & 0.463          & 0.334          & 0.572          & 0.6            & 0.703          & 0.664          & 0.633          & 0.856          & 0.564          \\
  MANH                               & 0.727          & 0.698          & 0.755          & 0.785          & 0.766          & 0.788          & 0.695          & 0.857          & 0.555          \\
  MINK($\lambda=3$)                                   & 0.67           & 0.5            & 0.511          & \textbf{0.321} & 0.589          & 0.601          & \textbf{0.324} & 0.798          & 0.724          \\
  CHEB                               & 0.814          & 0.814          & 0.814          & 0.832          & 0.832          & 0.832          & 0.863          & 0.863          & 0.863          \\
  CANB                               & 0.664          & 0.688          & 0.792          & 0.805          & 0.733          & 0.775          & 0.721          & \textbf{0.378} & 0.496          \\
  MAHA                               & 0.744          & 0.728          & 0.75           & 0.789          & 0.771          & 0.787          & 0.685          & 0.862          & 0.624          \\
  ANGU                               & 0.814          & 0.814          & 0.823          & 0.829          & 0.787          & 0.746          & 0.824          & 0.734          & 0.725          \\
  P-COR                              & 0.316          & \textbf{0.229} & \textbf{0.325} & 0.463          & \textbf{0.234} & 0.353          & 0.441          & 0.496          & \textbf{0.447} \\
  GAUS                               & 0.814          & 0.814          & 0.814          & 0.832          & 0.832          & 0.832          & 0.866          & 0.866          & 0.866          \\
  $\varepsilon$-NE                                   & \textbf{0.052} & 0.721          & 0.487          & 0.446          & 0.609          & \textbf{0.172} & 0.542          & 0.641          & 0.623          \\
  $k$-NN                                   & \textbf{0.075} & 0.593          & 0.346          & 0.429          & 0.629          & \textbf{0.172} & 0.497          & 0.648          & 0.618          \\
  \bottomrule
\end{tabular}
\begin{tablenotes}
  \footnotesize
  \item[{$^a$}] KIM: Kinetic Ising Model. $^b$ BM: Branching-Process Model. $^c$ IGM: Ising-Glauber Model.
\end{tablenotes}
\end{threeparttable}
\end{table*}
}

\section{Experiment Evaluation}\label{se:05}
In this section, we empirically assess the CMDI algorithm proposed in this study.
All experiments are executed on an Intel(R) Xeon(R) Silver server comprising 30 processors, each equipped with 10 CPU cores clocked at 2.40GHz and 64GB memory. 
The proposed algorithms are implemented using Python. 

\subsection{Experimental Settings}\label{subse:0501}
Three real-world datasets and three synthetic datasets in our experiments are listed in Table \ref{tab:01}.

\textbf{Three Real-world Datasets:}
The \textbf{BJ\_$6^{th}$R\_Geo} dataset, derived from OpenStreetMap (OSM) \footnote{https://www.openstreetmap.org/}, features over 115,000 geographical locations of road intersections within Beijing's Sixth Ring, refined to exclude non-intersection nodes and dead ends, resulting in 115,101 nodes and 168,041 edges. \textbf{BJ\_Busstop}, sourced from Beijing City Lab (BCL) \footnote{https://www.beijingcitylab.com/}, includes 42,161 bus stops (1543 bus routes) in Beijing, compiled by Dr. Jianghao Wang in 2013 in the "GCS WGS 1984" coordinate system. \textbf{BJ\_Parking} contains all Beijing parking locations for 2014, also coordinated in "GCS WGS 1984".

\begin{figure}[htbp]
  \centering
  \includegraphics[width=0.7\linewidth]{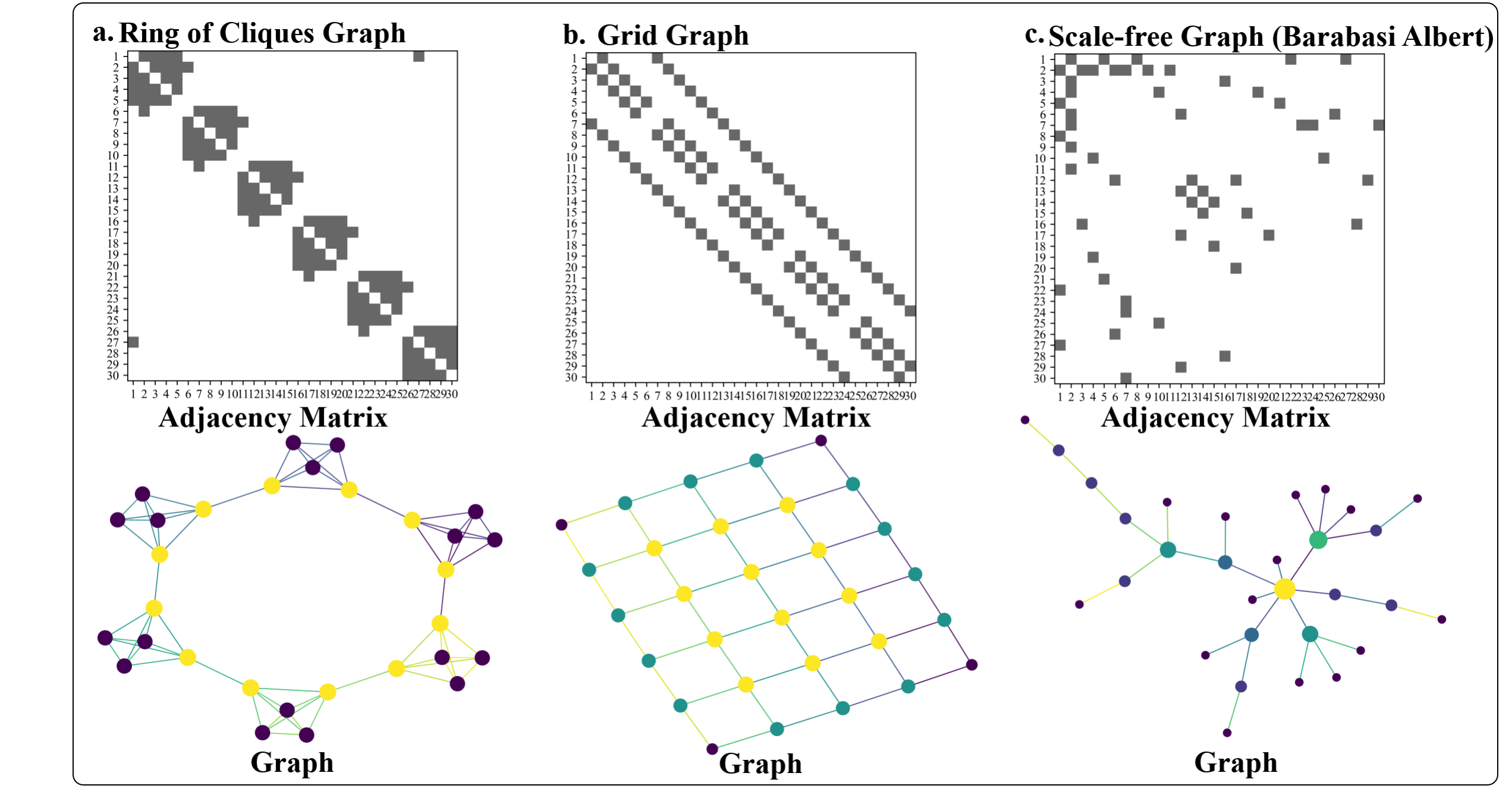}
  \caption{Illustration of the three different graphs used in the present paper. a. Ring of clique graph with six cliques. b. Grids graph. c. Scale-free graph with Barabasi-Albert model.}
  \label{fig:03}
\end{figure}

\begin{figure}[htbp]
  \centering
  \includegraphics[width=0.7\linewidth]{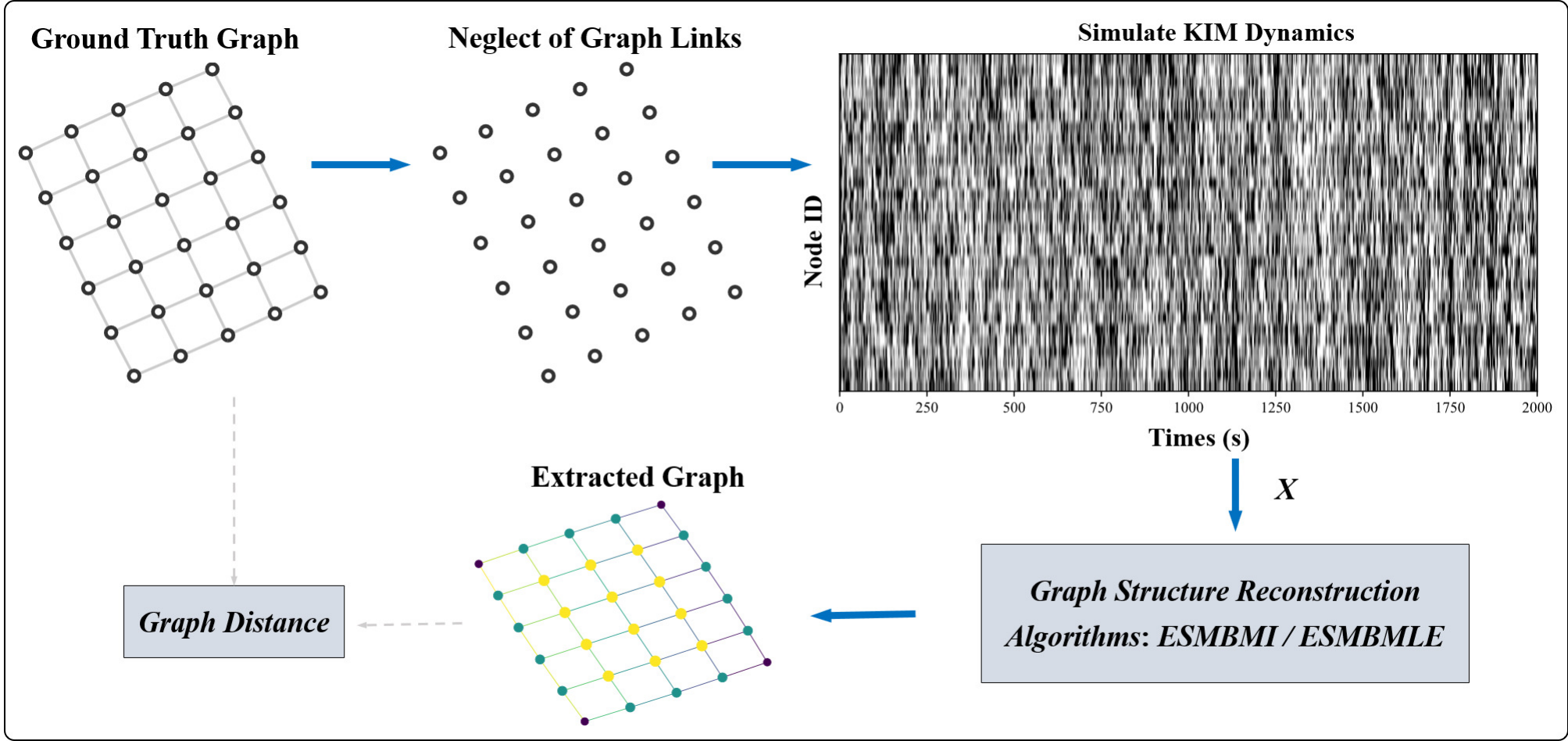}
  \caption{Workflow for generating synthetic time-series data from a given graph topology. 
  Initially, we select a graph of interest and then build the graph adjacency matrix $A$ whose elements are zeros (based on the ground truth graph, the link of the graph is neglected). Next, the Kinetic Ising model (KIM) is used to simulate the dynamical process of the graph, and then the synthetic time-series data $X$ is generated and the graph reconstruction method is taken to extract the graph structure (the structure reconstruction algorithms: ESMBMI and ESMBMLE). Finally, the graph distance is used to evaluate the similarity between the ground-truth graph and extracted graph.
  }
  \label{fig:04}
\end{figure}

\textbf{Three Synthetic Datasets:}
The evaluation of the ESMBMI and ESMBMLE methods is conducted on three distinct graph topologies using synthetic time-series data generated by the Kinetic Ising Model (KIM), with node connections disregarded for clarity. All graphs share the same dimension, $n=30$, and are based on $n \times n$ adjacency matrices. The three graph types used for comparison are:
\textbf{Ring of Cliques Graph:} Composed of interconnected cliques where each clique is a complete graph. In this setup, the graph contains 6 cliques, each with 5 nodes, as shown in Figure \ref{fig:03}a.
\textbf{Grid Graph:} Features a grid structure with nodes connected in a grid pattern. Both the adjacency matrices and the graph topology are demonstrated in Figure \ref{fig:03}b.
\textbf{Scale-Free Graph (Barabasi-Albert Model):} This graph follows a power-law degree distribution, created through preferential attachment from an initial set of $n_0$ nodes. New nodes are added with $m < n_0$ edges to the existing nodes based on a probability linked to each node's degree. The scale-free graph ultimately has $n-1$ edges. Figure \ref{fig:03}c shows a schematic of a scale-free graph with $n=30$, based on the Barabasi-Albert model.
These diverse graph structures allow for a comprehensive assessment of the ESMBMI and ESMBMLE methods' performance in reconstructing system structures from synthetic time-series data.

\textbf{Baselines and PKs:}
CMDI is benchmarked against MinBatch-K-Means and DBSCAN clustering methods. MinBatch-K-Means, a centroid-based model \cite{DBLP:conf/www/Sculley10}, and DBSCAN, a density-based model grouping densely packed data while marking sparse area points as outliers \cite{DBLP:journals/tods/SchubertSEKX17}, and SDCN \cite{bo2020structural}, a deep learning-based clustering approach that integrates GCN and DNN via a propagation operator and self-supervised modules for deep clustering in complex scenarios \cite{bo2020structural}, serve as baselines. Spectral clustering, similar to CMDI but more complex, is excluded due to server limitations. Experimental results show prior knowledge (PK) significantly boosts performance. Three PK types are employed: MinBatch-K-Means-based PK, DBSCAN-based PK, and Informap-based PK \cite{Rosvall1118}, each contributing to enhanced algorithm efficiency.

\textbf{Weight Threshold:}
In weighted graph extraction, a key rule is applied: edge weight is inversely proportional to node distance. A threshold parameter filters out minor weights for graph reliability and practicality. Edge weights are determined by the reciprocal of distance or similarity.

\textbf{Graph Distance:}
This paper employs the novel Hamming-Ipsen-Mikhailov(HIM) \cite{7344816} distance metric to quantitatively measure the dissimilarity between two graphs sharing the same vertices. The HIM distance metric is defined as follows:

\begin{scriptsize}
\begin{equation}
  \setlength{\abovedisplayskip}{1pt}
  \setlength{\belowdisplayskip}{1pt}
  \begin{aligned}
    H I M_{\xi}\left(G_{1}, G_{2}\right) &=\frac{1}{\sqrt{1+\xi}}\lvert\left(H\left(G_{1}, G_{2}\right), \sqrt{\xi} \cdot I M\left(G_{1}, G_{2}\right)\right)\rvert_{2} \\
    &=\frac{1}{\sqrt{1+\xi}} \sqrt{H^{2}\left(G_{1}, G_{2}\right)+\xi \cdot I M^{2}\left(G_{1}, G_{2}\right)}
  \end{aligned}
  \label{eq:11}
\end{equation}
\end{scriptsize}

where $G_x$ denotes the extracted graph, $H(\cdot)$ and $IM(\cdot)$ represent the Hamming distance metric and the Ipsen-Mikhailov \cite{MA02} distance respectively,
$\xi$ denotes the impact factor. Apparently,
$HIM_0(\cdot)=H(\cdot)$ and
$\lim _{\xi \Rightarrow+\infty} H I M_{\xi}(\cdot)=I M(\cdot)$. The distance metric
$H I M_{\xi}\left(G_{1}, G_{2}\right)$ is bounded in the interval [0, 1] and all distances will be nonzero for non-identical isomorphic graphs.

\subsection{HIM Distance and DI in the Synthetic Dataset}
This subsection primarily assesses the performance of graph extraction methods on synthetic datasets in terms of HIM distance and DI. The Kinetic Ising model (KIM) is employed to simulate the dynamical process of obtaining synthetic time-series data, capturing the graph's dynamic evolution during data generation. For comparison purposes, the length of the time-series data is standardized to 2001 seconds. Figures \ref{fig:05} and \ref{fig:06} present the corresponding results on synthetic data. Notably, the HIM distances of the structure extraction method based on maximum likelihood estimation are consistently the smallest.
An intriguing finding is that the graph HIM distances for the correlation matrix, graphical lasso, and mutual information matrix structure extraction methods exhibit no variance in realizing the ring of cliques graph, grid graph, and scale-free graph. Figure \ref{fig:06} illustrates the DI with different structure extraction methods across three graph types, and once again, the method based on maximum likelihood estimation demonstrates superior performance.
In summary, the results depicted in Figures \ref{fig:05} and \ref{fig:06} highlight that the maximum likelihood estimation method excels at capturing the inherent natural correlations between data points in synthetic datasets. Therefore, for time-series datasets, the maximum likelihood estimation-based method proves to be the most effective approach for accurately deriving a graph from a distribution that incorporates edge weight information and degree-dependent distributions.

\subsection{ HIM Distance and DI of the Extracted Graph Based on Different Proximity Metrics}
Graph reconstruction is essential for clustering methods based on graph models, where local neighbourhood relationships between data points are modelled. Consequently, various proximity measurements quantify the distance or closeness between two data objects, and the distance may vary across these metrics. In Subsection \ref{subse:0501}, we evaluated common graph reconstruction methods. In this subsection, we similarly assessed different proximity measurement methods for graph extraction.

Tables \ref{tab:02} and \ref{tab:03} present the graph HIM distances and decoding information obtained from a synthetic dataset using different proximity measurement methods. To facilitate comparison, three dynamical process models were employed to simulate time series on the synthetic network. Specifically, the simulation models include the Kinetic Ising Model \cite{DBLP:journals/amc/FonsecaKMM15} (KIM), Branching-Process Model \cite{DBLP:conf/dac/SastryM91} (BM), and Ising-Glauber Model \cite{MVG98} (IGM). The length of the time-series data is standardized to 2001 seconds for all simulation models, with the original synthetic graph serving as the ground truth. Additionally, we set an appropriate threshold to enhance efficiency.

Table \ref{tab:02} shows for the ring of cliques graph, $\varepsilon$-NE and $k$-NN are most effective in the KIM model (HIM distances 0.052, 0.075), while P-COR leads in the BM and IGM models (HIM distances 0.229, 0.325). For the grid graph, MINK($\lambda=3$) excels in the KIM model (HIM distance 0.321), P-COR in the BM model (HIM distance 0.234), and both $\varepsilon$-NE and $k$-NN in the IGM model (HIM distance 0.172). In the scale-free graph, MINK($\lambda=3$) is best for KIM (HIM distance 0.324), CANB for BM (HIM distance 0.378), and P-COR for IGM (HIM distance 0.447).

Table \ref{tab:03} summarizes the effectiveness of proximity metrics in various models and graphs. For the ring of cliques graph in the KIM model, $\varepsilon$-NE and $k$-NN lead (DI: 2.543, 2.289 bits), P-COR tops in the BM model (DI: 1.083 bits), and $k$-NN in IGM (DI: 1.211 bits). In the grid graph, MINK($\lambda=3$) is optimal for KIM (DI: 1.494 bits), P-COR for BM (DI: 1.021 bits), and both $\varepsilon$-NE and $k$-NN for IGM (DI: 1.601, 1.738 bits). For the scale-free graph, $\varepsilon$-NE and $k$-NN are best in KIM (DI: 1.131, 1.443 bits), CANB in BM (DI: 1.687 bits), and CANB in IGM (DI: 1.064 bits). Overall, $\varepsilon$-NE and $k$-NN are highly effective for time-series datasets as shown in Tables \ref{tab:02} and \ref{tab:03}.

{\footnotesize
\begin{table*}[p]
  \centering
  \caption{The Decoding Information (DI) of the Extracted Graph Based on Different Proximity Metrics.}
  \label{tab:03}
  \begin{threeparttable}
  \begin{tabular}{ccccccccccccc}

    \toprule
    \multirow{2}{*}{Proximity Metrics} & \multicolumn{4}{c}{Ring of Cliques Graph}      & \multicolumn{4}{c}{Gid Graph}                            & \multicolumn{4}{c}{Scale-free Graph}                     \\\cline{2-13}
    & GT\tnote{$^a$}   & KIM$^b$            & BM$^c$             & IGM$^d$   & GT    & KIM            & BM             & IGM            & GT    & KIM            & BM             & IGM            \\
    \midrule
EUC                                & 2.35 & 0.993          & 0.996          & 0.706 & 1.567 & 0.656          & 0.569          & 0.599          & 1.984 & 0.804          & 0.49           & 0.942          \\
MANH                               & 2.35 & 0.509          & 0.504          & 0.501 & 1.567 & 0.495          & 0.495          & 0.501          & 2.194 & 0.641          & 0.481          & 0.71           \\
MINK($\lambda=3$)                               & 2.35 & 0.541          & 0.66           & 0.864 & 1.567 & \textbf{1.494} & 0.568          & 0.721          & 2.08  & \textbf{1.443} & 0.529          & 0.602          \\
CHEB                               & 2.35 & 0.483          & 0.483          & 0.483 & 1.567 & 0.483          & 0.483          & 0.483          & 2.125 & 0.483          & 0.483          & 0.483          \\
CANB                               & 2.35 & 0.599          & 0.543          & 0.479 & 1.567 & 0.478          & 0.538          & 0.509          & 2.016 & 0.565          & \textbf{1.687} & \textbf{1.064} \\
MAHA                               & 2.35 & 0.53           & 0.521          & 0.506 & 1.567 & 0.5            & 0.496          & 0.481          & 2.194 & 0.647          & 0.484          & 0.668          \\
ANGU                               & 2.35 & 0.483          & 0.483          & 0.525 & 1.567 & 0.485          & 0.472          & 0.551          & 2.116 & 0.481          & 0.497          & 0.502          \\
P-COR                              & 2.35 & 0.82           & \textbf{1.083} & 0.85  & 1.567 & 0.778          & \textbf{1.021} & 0.871          & 2.255 & 0.874          & 0.884          & 0.812          \\
GAUS                               & 2.35 & 0.483          & 0.483          & 0.483 & 1.567 & 0.483          & 0.483          & 0.483          & 1.956 & 0.483          & 0.483          & 0.483          \\
$\varepsilon$-NE                                & 2.35 & \textbf{2.543} & 0.497          & 0.871 & 1.567 & 0.988          & 0.589          & \textbf{1.601} & 2.107 & 1.131          & 0.727          & 0.734          \\
$k$-NN                                & 2.35 & \textbf{2.289} & 0.661          & 1.211 & 1.567 & 1.003          & 0.554          & \textbf{1.738} & 2.224 & 0.897          & 0.644          & 0.822  \\
\bottomrule       
\end{tabular}

  \begin{tablenotes}
    \footnotesize
    \item[{$^a$}] GT: Ground Truth Graph. $^b$ KIM: Kinetic Ising Model. $^c$ BM: Branching-Process Model. $^d$ IGM: Ising-Glauber Model. 
  \end{tablenotes}
  \end{threeparttable}
\end{table*}
}

\begin{figure*}
  \centering
  \newlength{\subfigheight}
  \setlength{\subfigheight}{4cm} 
  \begin{minipage}[b]{.19\textwidth}
    \centering
    \includegraphics[width=\linewidth]{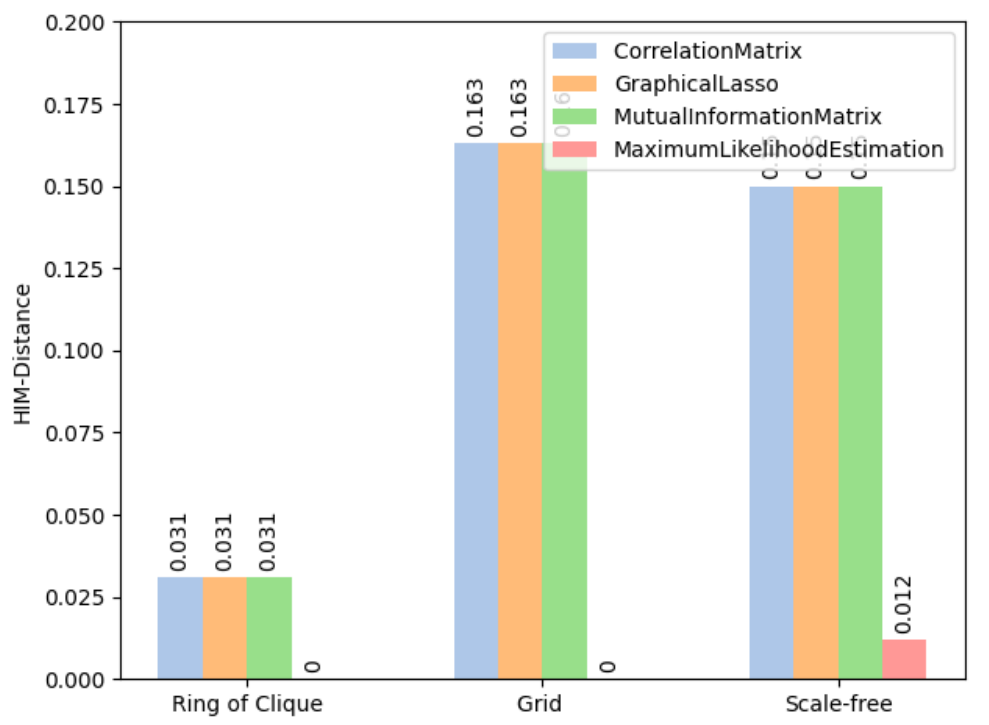}
    \caption{Comparison of HIM-Distances.}
    \label{fig:05}
  \end{minipage}
  \begin{minipage}[b]{.19\textwidth}
    \centering
    \includegraphics[width=\linewidth]{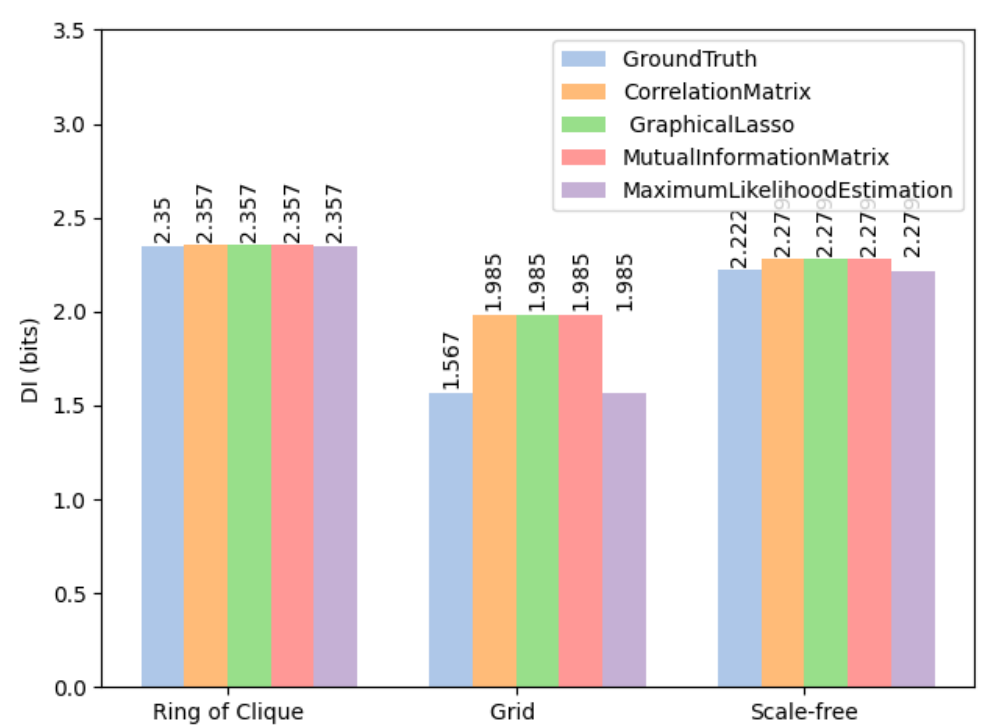}
    \caption{Comparison of DI.}
  \label{fig:06}
  \end{minipage}
  \begin{minipage}[b]{.19\textwidth}
    \centering
    \includegraphics[width=\linewidth]{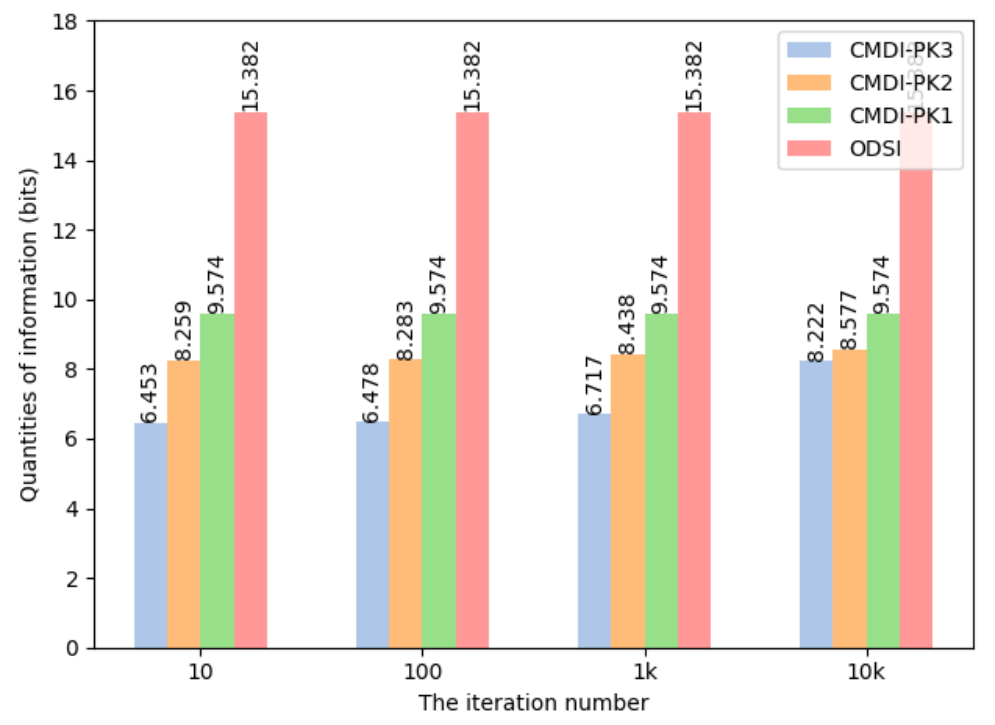}
    \caption{DI of GDIMAOP in BJ\_6$^{th}$R\_Geo.}
  \label{fig:08}
  \end{minipage}
  \begin{minipage}[b]{.19\textwidth}
    \centering
    \includegraphics[width=\linewidth]{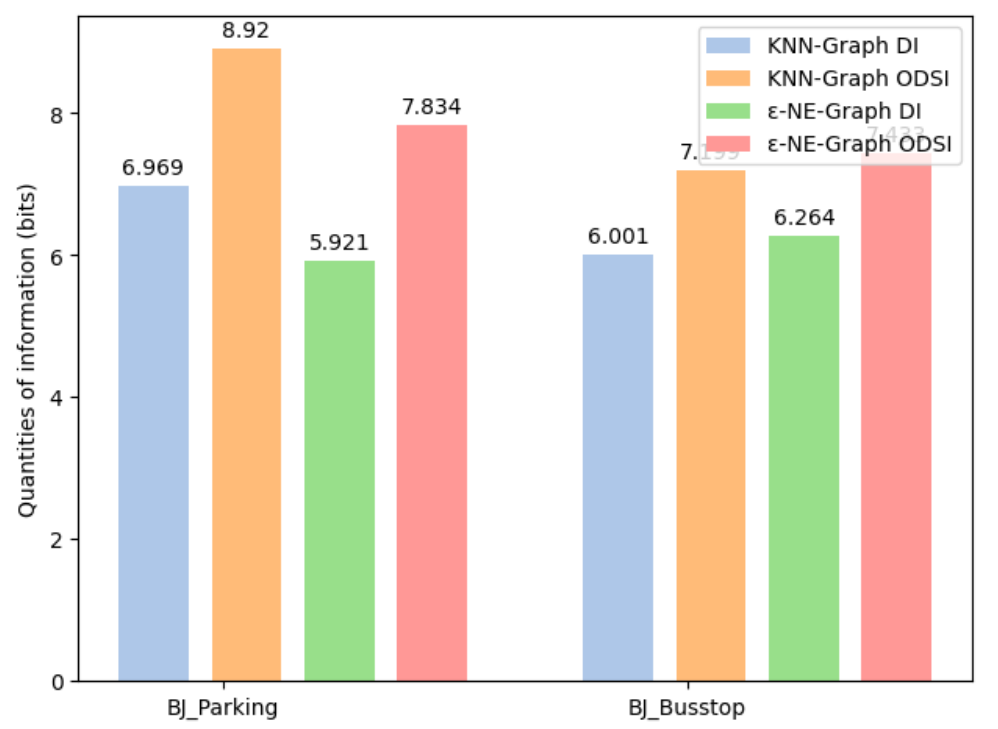}
    \caption{Results with PK1 Initialization.}
  \label{fig:09}
  \end{minipage}
  \begin{minipage}[b]{.19\textwidth}
    \centering
    \includegraphics[width=\linewidth]{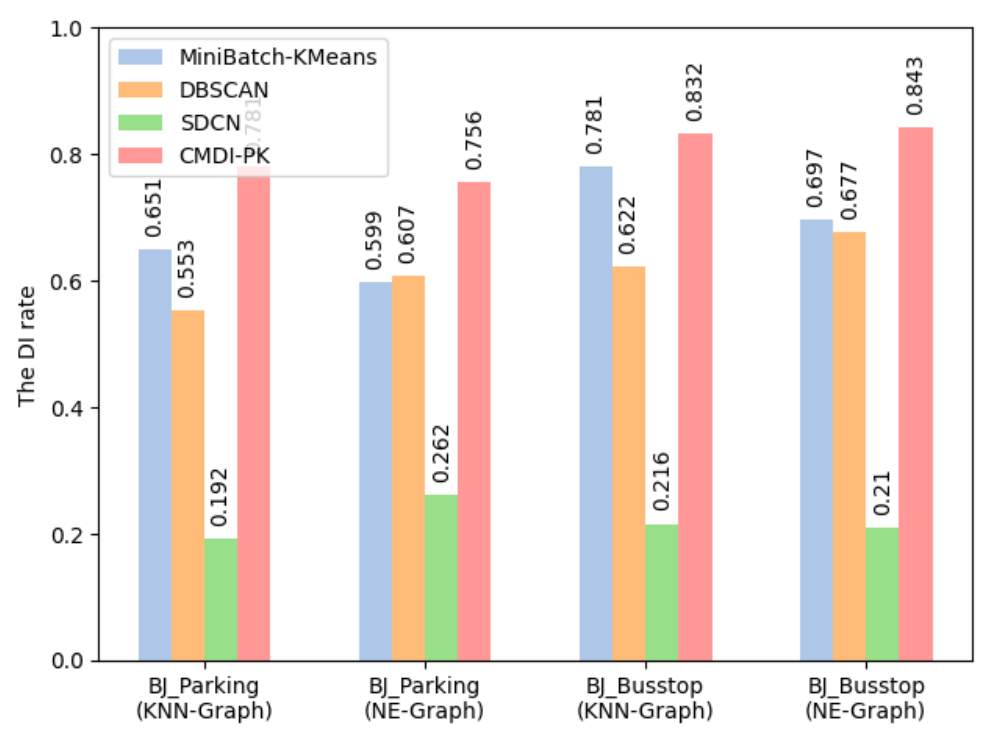}
    \caption{Comparison of Different Clustering Methods’ DI-R.}
  \label{fig:10}
  \end{minipage}
  \label{fig:test}
\end{figure*}

\begin{figure}[htbp]
  \centering
  \includegraphics[width=0.7\linewidth]{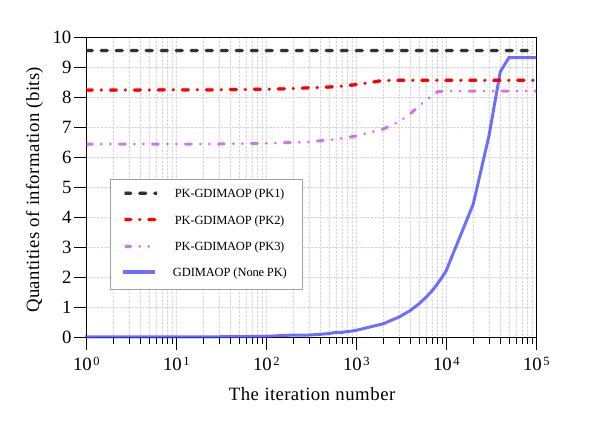}
  \caption{The Decoding Information (DI) State Updation of GDIMAOP with Different Prior Knowledge Initialization.} 
  \label{fig:07}
\end{figure}

\subsection{Decoding Information Quantities of GDIMAOP with Different PKs}
This subsection examines GDIMAOP's decoding information with different prior knowledge (PK) setups, as shown in Figure \ref{fig:07}. PK1 (Informap), PK2 (MinBatch-K-Means), and PK3 (DBSCAN) are used as initial partitions for GDIMAOP.

Figure \ref{fig:07} shows that using prior knowledge, especially PK1, significantly improves GDIMAOP's performance, maintains stable decoded information throughout iterations, and leads to faster convergence compared to GDIMAOP without prior knowledge. This suggests the effectiveness of integrating prior knowledge in graph vertex partitioning, a finding applicable to the CMDI algorithm where PK1 yields the most improvement, with CMDI-PK outperforming and converging faster than the original CMDI.

Evaluations on the BJ\_6$^{th}$R\_Geo dataset, as shown in Figure \ref{fig:08}, reveal that CMDI-PK1, using Informap as prior knowledge, achieves the highest decoding information, effectively decoding this real-life data.

\subsection{CMDI Performance on Geo-data (BJ\_Busstop and BJ\_Parking)}
This section evaluates CMDI's performance on geospatial datasets BJ\_Busstop and BJ\_Parking, as shown in Figure \ref{fig:09}. 
For BJ\_Parking, using $k$-NN, the one-dimensional structure entropy is 8.920 bits with a decoding information of 6.969 bits; $\varepsilon$-NE yields 7.834 bits and 5.921 bits, respectively. In BJ\_Busstop, $k$-NN results in a structural entropy of 7.199 bits and decoding information of 6.001 bits, while $\varepsilon$-NE shows 7.433 bits and 6.264 bits, respectively.

Figure \ref{fig:10} shows DI-R values for CMDI: 0.78 ($k$-NN, BJ\_Parking), 0.76 ($\varepsilon$-NE, BJ\_Parking), 0.83 ($k$-NN, BJ\_Busstop), and 0.84 ($\varepsilon$-NE, BJ\_Busstop). MinBatch-K-Means ranks second on average, while CMDI outperforms MinBatch-K-Means, DBSCAN, and SDCN in DI-R.

\section{CONCLUSIONS}\label{se:06}
This paper presents an innovative clustering methodology designed to optimize decoding information. Demonstrated through comprehensive experiments on three real-world datasets (BJ\_6$^{th}$R\_Geo, BJ\_Busstop, and BJ\_Parking), CMDI consistently surpasses traditional methods in DI-R efficiency. Significantly, CMDI marks a pioneering effort in seamlessly blending two-dimensional structural information theory with data clustering. This integration heralds new avenues for advanced applications in data mining, setting a precedent for future explorations in the field.

\bibliographystyle{elsarticle-num}
\bibliography{mybib}
\end{document}